\theoremstyle{plain}
\newtheorem{theorem}{Theorem}[section]
\newtheorem{lemma}[theorem]{Lemma}
\newtheorem{corollary}[theorem]{Corollary}
\theoremstyle{definition}
\theoremstyle{remark}
\icmltitlerunning{Neural Attention Memory}
\begin{document}

\twocolumn[
\icmltitle{Neural Attention Memory}



\icmlsetsymbol{equal}{*}

\begin{icmlauthorlist}
\icmlauthor{Hyoungwook Nam}{uiuc}
\icmlauthor{Seung Byum Seo}{uiuc}
\end{icmlauthorlist}

\icmlaffiliation{uiuc}{Department of Computer Science, University of Illinois at Urbana-Champaign, Urbana, Illinois}

\icmlcorrespondingauthor{Hyoungwook Nam}{hn5@illinois.edu}
\icmlcorrespondingauthor{Seung Byum Seo}{sbseo2@illinois.edu}

\icmlkeywords{Deep Learning, Machine Learning, Neuro-symbolic AI, Memory augmented neural network, Few-shot learning, Transformer, ICML}

\vskip 0.3in
]



\printAffiliationsAndNotice{}  

\begin{abstract}

We propose a novel perspective of the attention mechanism by reinventing it as a memory architecture for neural networks, namely Neural Attention Memory (NAM).
NAM is a memory structure that is both readable and writable via differentiable linear algebra operations.
We explore three use cases of NAM: memory-augmented neural network (MANN), few-shot learning, and efficient long-range attention.
First, we design two NAM-based MANNs of Long Short-term Memory (LSAM) and NAM Turing Machine (NAM-TM) that show better computational powers in algorithmic zero-shot generalization tasks compared to other baselines such as differentiable neural computer (DNC).
Next, we apply NAM to the N-way K-shot learning task and show that it is more effective at reducing false positives compared to the baseline cosine classifier.
Finally, we implement an efficient Transformer with NAM and evaluate it with long-range arena tasks to show that NAM can be an efficient and effective alternative for scaled dot-product attention.
\end{abstract}

\section{Introduction}
\label{sec:intro}
Scaled dot-product attention~\citep{transformer} has become a core mechanism of state-of-the-art deep learning models for variety of machine learning tasks, including natural language processing~\citep{bert}, multi-modal task~\citep{visualbert}, and graph data processing~\citep{graphsage}.
Specifically, the Transformers using the self-attention method have replaced recurrent neural networks (RNN) by outperforming them in most of the tasks.

The attention mechanism can be understood as a read-only memory.
Given a sequence of values, the attention reads a specific value by computing a score for each value and conducting the weighted sum of the values with their scores.
In this paper, we reconstruct the attention mechanism as a memory that is both readable and writable, namely \emph{neural attention memory} (NAM).
Using the same query-key-value setup of the scaled dot-product attention, we define read and write primitives to a \emph{memory matrix}.
The write ($WR$) operation overwrites the value vector mapped to the unit key vector in the memory matrix by adding the outer-product of two.
Then the read ($RD$) operation reads the value vector mapped to the unit query vector simply by matrix-vector multiplication.
We mathematically prove that after the write operation, reading the memory matrix with the same key outputs the most recently written value.
All of the operations are based on differentiable and efficient linear algebra operations so that NAM-augmented neural networks can be end-to-end trained using backpropagation.

Among many possible applications of this differentiable memory structure, we explore three use cases.
First, we design a memory-augmented neural network (MANN) with NAM to solve algorithmic tasks.
We propose two MANN structures using NAM: Long Short-term Attention Memory replaces LSTM~\cite{lstm}'s long-term memory with a memory matrix, and NAM Turing Machine (NAM-TM) implements a differentiable Turing tape using NAM primitives.
We compare LSAM and NAM-TM to others in algorithmic tasks of number sequence prediction~\citep{numbersequenceprediction} and sequence reduction.
Specifically, we test their inductive zero-shot generalization capability in length by training the models with sequences of limited length and validating them with longer sequences unobservable during training.
Transformers often fail to generalize such algorithmic rules as they lack inductive bias~\cite{universaltransformer} and are stateless.
The evaluation results show that their computational powers are superior to other baselines, including Universal Transformer~\citep{universaltransformer} and DNC~\citep{dnc}.
While the generic LSAM model consistently outperforms the others, NAM-TM shows even better results at algorithmic tasks.
The results indicate that NAM is a powerful method to implement memory in neural networks.

Second, we propose using NAM for few-shot learning.
Writing to a memory matrix maps the input-output pair in one-shot in that memorization with NAM can be a good few-shot learning method.
We propose a $N$-way $K$-shot learning mechanism via memorization to a memory matrix.
We apply NAM based few-shot learning method to the $N$-way $K$-shot learning setup with MiniImageNet dataset~\cite{fewshotwithoutforgetting}.
NAM's performance is mostly on par with the state-of-the-art feature-averaging cosine classifier as it can be understood as a special case of NAM based few-shot learning.
NAM shows better classification results when both novel and base classes are present, suggesting that NAM's erasure functionality is effective at reducing false positives.

Lastly, we construct an efficient Transformer structure based on NAM.
Scaled dot-product attention has computational complexity becomes quadratic to the length of the sequence.
We present an efficient attention mechanism based on NAM that has linear complexity to the sequence length, namely normalized outer-product attention.
We evaluate NAM-based efficient Transformer in long-range arena~\citep{longrangearena} tasks.
Its efficacy is on par with Transformer and Linear Transformer, implying that NAM can be an efficient alternative to the scaled dot-product attention.

The main contributions of this work are as follows:
\begin{itemize}
    \setlength{\itemsep}{0pt}
    \item We re-invent the attention mechanism as a memory architecture for neural networks that is both readable and writable, namely neural attention memory (NAM).
    \item Using NAM's read and write primitives, we propose NAM-augmented neural network designs, a few-shot learning method, and an efficient linear attention.
    \item We theoretically and experimentally prove that NAM can be used for differentiable memory structure, few-shot learning, and efficient attention.
\end{itemize}


\section{Background}
\label{sec:background}
\subsection{Scaled dot-product attention}
Attention mechanisms of deep neural networks~\citep{bahdanau,luong} provide differentiable methods of selectively attending items from a variable-length sequence.
While there are multiple variations of attention mechanism, most of them share the same high-level structure: 1) compute the attention scores of the items, and 2) return the weighted sum of their vector representations using the scores.
Among the variations, \textit{scaled dot-product attention}~\citep{transformer} has been the most successful.
For each token, there are a key vector and a value vector associated to it.
Given a query vector, the scores are determined by the scaled dot-product of the query and the keys.
Then the output is computed by weighted sum of softmaxed scores and the value vectors.

The self-attention mechanism based on the scaled dot-product attention has proven to be very powerful, replacing the needs of RNNs.
Since attention reaches every element of a sequence in an $O(1)$ path, it avoids the vanishing gradient problem~\citep{vanishing}, enjoys high parallelism, and allows huge models with deeply stacked layers~\citep{bert, gpt3}.
However, its stateless and parallel architecture also bring multiple limitations.
First, the computational cost quadratically increases with the sequence length, making it inefficient for long-range contexts~\citep{longformer,performer,reformer,lineartransformer} and edge inference environments~\citep{edgebert}.
Also, the memory-less architecture lacks inductive bias~\citep{universaltransformer}, making it impossible to generalize inductive algorithmic rules~\citep{seq2grid}.
There are researches to resolve those issues, but a practical solution is yet to be found.

\subsection{Memory-augmented neural network}
In theory, RNNs are proven to be as powerful as Turing machines~\citep{rnnturing}.
However, they fail to learn algorithmic patterns that require at least pushdown automata in practice~\citep{numbersequenceprediction}.
Therefore, there have been efforts to augment external memory architecture to neural networks, as known as memory-augmented neural networks (MANN)~\citep{dequernn,stackrnn,ntm,dnc}.
The main challenge for MANNs is to design differentiable read and write functions that can be trained via back-propagation.
Some MANNs use attention mechanism for differentiable read/write functions.
For instance, Neural Turing Machine (NTM)~\citep{ntm} and Differentiable Neural Computer (DNC)~\citep{dnc} leverage attention mechanism for implementing content-based addressing.
However, the addressing mechanisms are often not powerful enough so that they need to be augmented with complex extras, such as a link matrix with $O(N^2)$ cost.
Hence, MANNs often become inefficient and complex so that they are considered impractical outside of the algorithmic task domain.

\subsection{Transformer and Memory}
Transformers' attention mechanism can be understood as a memory where all hidden activations are stored and selectively read.
Since storing the entire sequence is inefficient, there have been studies to efficiently scale to longer context by adding memory architectures to Transformers.
Transformer XL~\citep{transformerxl} has attempted to overcome the limited context length of Transformers by using segment-level recurrence and relative positional encoding.
Compressive Transformer~\citep{compressivetransformer} and $\infty$-former~\citep{infiniteformer} further have extended the idea by compressing the recurrent segment into a smaller memory and using less-precise unbounded continuous memory, respectively.
While these works effectively address the scalability problem of Transformers, they have not extended their ideas to generic memory architecture that can be freely read and written.

\subsection{Few-shot Learning}

Few-shot learning aims to train an ML model with only a few examples.
Transformers with huge numbers of parameters are known to be good few-shot learners~\cite{gpt3}, but the reason for this is yet unknown.
Among many different few-shot learning definitions, we focus on \emph{N-way K-shot} classification setup~\cite{matching}.
In this setup, a network is given base classes with plenty of samples and $N$ novel classes with $K$ samples each.
The state-of-the-art approach for this is to use a feature-averaging cosine classifier~\cite{fewshotwithoutforgetting,fewshotandcontinual}.
Given a feature vector extracted from the feature extractor model, the classification score is determined by its cosine similarity between the per-class weight vectors of the classifier.
The weight vectors for the $N$ classes are based on the average of the feature vectors of their $K$ samples.

\section{Method}
\label{sec:method}
\subsection{NAM Read and Write Primitives}
\label{sec:namprimitives}
The main idea of neural attention memory is implementing an attention mechanism via matrix-vector multiplication of a \emph{memory matrix} $M\in \mathbb{R}^{d_v\times d_k}$, a \emph{unit query vector} $q \in \mathbb{R}^{d_k}$ and a \emph{read probability} $0 \leq p_r \leq 1$. 
Hereby $d_v$ and $d_k$ are feature dimensions of value and key vectors respectively.
Then, the read operation ($RD$) of NAM computes the read vector $r\in \mathbb{R}^{d_v}$ as follows.
$$ r = RD(M,q,p_r) = p_rMq $$

The memory matrix $M$ is written by adding \emph{outer products} of unit key vectors $k_i$ and value vectors $v_i$.
The write operation $WR$ updates the memory matrix with a write probability $p_w$ and an erase probability $p_e$ as below.
$$ M' = WR(M,k,v,p_w,p_e) = M + p_w vk^\top - p_e Mkk^\top $$
This write operation guarantees that reading with the same key vector yields the most recently written value.
Such a property can be mathematically proven as below.
\begin{theorem}
\label{thm:readwrite}
If \(k\) is a unit vector and \(M' = WR(M,k,v,1,1)\), \(RD(M',k,1)\) = \(v\). 
\end{theorem}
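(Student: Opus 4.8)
The statement is a direct linear-algebra identity, so the plan is to unfold both primitives and simplify using the unit-vector hypothesis. First I would rewrite the left-hand side using the definition of the read primitive with $p_r = 1$, namely $RD(M',k,1) = M'k$. Then I substitute the write rule $M' = WR(M,k,v,1,1) = M + v k^\top - M k k^\top$ and distribute the matrix-vector product:
\[
M'k = \bigl(M + v k^\top - M k k^\top\bigr)k = Mk + v\,(k^\top k) - Mk\,(k^\top k).
\]
Here I am using associativity to regroup the trailing terms around the scalar $k^\top k$.

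Next I invoke the hypothesis that $k$ is a unit vector, so $k^\top k = \|k\|^2 = 1$. Substituting this in collapses the expression to $M'k = Mk + v - Mk = v$, which is exactly $RD(M',k,1) = v$, completing the argument. The whole computation is a couple of lines; there is no genuine obstacle.

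The one point worth emphasizing in the write-up is \emph{why} the cancellation works: $kk^\top$ is the rank-one orthogonal projector onto $\mathrm{span}(k)$ precisely because $\|k\|=1$, so the erase term $-Mkk^\top$ removes exactly the component of the stored content that the fresh write $vk^\top$ is meant to overwrite, leaving the newly written value intact. I would also note as a remark that the same calculation shows that reading with a unit key $k'$ orthogonal to $k$ is unchanged by this write (since then $k^\top k' = 0$), which is the property that motivates keeping the key set (approximately) orthonormal in the applications that follow.
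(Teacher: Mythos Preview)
Your proof is correct and is essentially the same one-line computation the paper gives: expand $M'k = Mk + v(k^\top k) - Mk(k^\top k)$ and use $k^\top k = 1$ to cancel. The additional remarks about $kk^\top$ being a projector and about orthogonal keys are accurate and helpful but go beyond what the paper records.
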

\begin{proof} 
$M'k = Mk + vk^\top k - Mkk^\top k = v$
\end{proof}
The verbal explanations of this theorem are as follows. 
First, $Mk = RD(M,k,1)$ yields the value in associated to $k$ so that $-p_e Mkk^\top$ \emph{erases} out the associated value from $M$.
Then, adding $p_w vk^\top$ overwrites the new value on top of it.

In a sequence of length $S$, if the keys $k^{(1)}, \dots, k^{(S)}$ are orthonormal, reading to the memory matrix can act as an attention mechanism.
\begin{lemma}
\label{lemma:orthonormal}
Let $k^{(1)}, \dots, k^{(S)}$ be orthonormal and $M^{(0)} = \boldsymbol{0}^{d_v \times d_k} $. If $M^{(i+1)} = WR(M^{(i)}, k_{i+1}, v_{i+1}, 1, 1)$, then $M^{(S)} = \sum_i{v_i k_i^\top}$.
\end{lemma}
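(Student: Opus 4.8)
The plan is to prove the slightly stronger statement by induction on $i$: for every $0 \le i \le S$, $M^{(i)} = \sum_{j=1}^{i} v_j k_j^\top$, with the empty sum read as the zero matrix. The lemma is then the case $i = S$. The base case $i = 0$ is immediate, since the hypothesis fixes $M^{(0)} = \boldsymbol{0}^{d_v \times d_k}$, which is exactly the empty sum.

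For the inductive step, assume $M^{(i)} = \sum_{j=1}^{i} v_j k_j^\top$ and unfold the definition of $WR$ with $p_w = p_e = 1$:
\begin{equation*}
M^{(i+1)} = M^{(i)} + v_{i+1} k_{i+1}^\top - M^{(i)} k_{i+1} k_{i+1}^\top .
\end{equation*}
The only thing to verify is that the erase term $M^{(i)} k_{i+1} k_{i+1}^\top$ disappears. Applying the inductive hypothesis, $M^{(i)} k_{i+1} = \sum_{j=1}^{i} v_j \bigl(k_j^\top k_{i+1}\bigr)$, and every inner product $k_j^\top k_{i+1}$ with $j \le i < i+1$ vanishes by orthonormality of the keys; hence $M^{(i)} k_{i+1} = \boldsymbol{0}$ and
\begin{equation*}
M^{(i+1)} = \sum_{j=1}^{i} v_j k_j^\top + v_{i+1} k_{i+1}^\top = \sum_{j=1}^{i+1} v_j k_j^\top ,
\end{equation*}
which closes the induction.

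There is no genuinely hard step here; the argument is just bookkeeping layered on top of \Cref{thm:readwrite}. The one place that deserves care is the erase term: the cancellation $M^{(i)} k_{i+1} = \boldsymbol{0}$ is precisely where \emph{orthogonality of distinct keys} is invoked — \Cref{thm:readwrite} only needed each individual key to be a unit vector — so the conclusion genuinely fails without the cross terms vanishing. It is worth noting in passing that since each $k_j$ is written exactly once here, only the orthogonality half of the hypothesis is used above; an alternative framing would describe the process as $S$ successive applications of \Cref{thm:readwrite} on pairwise orthogonal one-dimensional subspaces, but the direct induction is cleaner to write.
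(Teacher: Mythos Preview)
Your proof is correct and follows essentially the same approach as the paper: both argue by induction that $M^{(i)} = \sum_{j \le i} v_j k_j^\top$, with the key step being that $M^{(i)} k_{i+1} = \boldsymbol{0}$ by orthogonality of the keys so that the erase term vanishes. The only cosmetic difference is that you anchor the induction at $i=0$ while the paper starts at $i=1$, and you add the (correct) observation that only orthogonality, not unit length, is used in this particular argument.
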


\begin{proof} 
Trivially, $M^{(1)} = v_1 k_1^\top $. \\
If $M^{(i)} = v_1 k_1^\top + \dots + v_i k_i^\top$, then \( M^{(i)}k_{i+1} =  v_1 k_1^\top k_{i+1} + \dots + v_i k_i^\top k_{i+1} = 0\) as the keys are orthonormal. Hence,
$WR(M^{(i)}, k_{i+1}, v_{i+1}, 1, 1) = M^{(i)} + v_{i+1} k_{i+1}^\top + \boldsymbol{0} = v_1 k_1^\top + \dots + v_i k_i^\top + v_{i+1} k_{i+1}^\top$. \\
By mathematical induction, the lemma holds.
\end{proof}
The following theorem derived by the lemma shows that reading the memory matrix can be an attention mechanism.
\begin{theorem}
\label{thm:read}
$\forall i \in 1,\dots,S,  RD(M^{(S)},k^{(i)},1) =  v^{(i)}$
\end{theorem}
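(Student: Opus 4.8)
The plan is to invoke \Cref{lemma:orthonormal} to get a closed form for the memory matrix, and then reduce the read operation to a single orthonormality calculation. Concretely, since the keys $k^{(1)},\dots,k^{(S)}$ are orthonormal and $M^{(0)}=\boldsymbol{0}^{d_v\times d_k}$ with $M^{(i+1)}=WR(M^{(i)},k_{i+1},v_{i+1},1,1)$, \Cref{lemma:orthonormal} gives $M^{(S)}=\sum_{j=1}^{S} v_j k_j^\top$. So the whole theorem rests on that identity being available; everything after it is a short computation.

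Next I would unfold the definition of the read primitive with read probability $p_r=1$: $RD(M^{(S)},k^{(i)},1)=M^{(S)}k^{(i)}$. Substituting the closed form yields $M^{(S)}k^{(i)}=\bigl(\sum_{j=1}^{S} v_j k_j^\top\bigr)k^{(i)}=\sum_{j=1}^{S} v_j\,(k_j^\top k^{(i)})$. Orthonormality means $k_j^\top k^{(i)}=\delta_{ij}$, i.e. the inner product is $1$ when $j=i$ and $0$ otherwise, so every term in the sum vanishes except the $j=i$ term, leaving $v_i\cdot 1 = v^{(i)}$. Since $i$ was arbitrary in $\{1,\dots,S\}$, the claim holds for all $i$.

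There is essentially no obstacle here: the only thing to be careful about is bookkeeping between the subscript notation $v_i,k_i$ used in the lemma and the superscript notation $v^{(i)},k^{(i)}$ in the theorem statement, which I would reconcile explicitly (they denote the same vectors) so the substitution step reads cleanly. One could also note, for emphasis, that this is exactly the single-key case of \Cref{thm:readwrite} applied after the writes have been shown (via the lemma) not to interfere with one another — but the direct computation above is the cleanest route and is what I would present.
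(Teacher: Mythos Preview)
Your proposal is correct and follows essentially the same approach as the paper: invoke \Cref{lemma:orthonormal} to obtain $M^{(S)}=\sum_j v_j k_j^\top$, then compute $M^{(S)}k^{(i)}$ and use orthonormality to collapse the sum to $v^{(i)}$. The paper's proof is the same computation compressed into a single line; your version simply spells out the read-primitive unfolding and the Kronecker-delta step more explicitly.
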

\begin{proof} 
$M^{(S)}k^{(i)} = \sum_t v^{(t)}k^{(t)\top}k^{(i)} = v^{(i)}$ as $ k^{(t)\top} k^{(i)}$ is 1 when $t=i$ and 0 otherwise.
\end{proof}

At each time step, the computational and space complexity of NAM are both $O(d_v d_k)$.
Since $d_v$ and $d_k$ are per-head dimensions, it can be further reduced by the factor of the number of the heads $H$, to  $O(H (d_v/H) (d_k/H)) = O(d_v d_k / H)$.
On the other hand, scaled dot-product attention's compute and space complexities are $O(S (d_v + d_k))$, if the sequence has $S$ tokens.
This opens up the opportunity for using NAM as an efficient linear attention mechanism for long-range sequential tasks or edge inference environments.
Furthermore, the outer-product operation $vk^\top$ for NAM is often more efficient than dot-products in modern parallel HW since its compute complexity $O(d^2)$ is much higher than the amount of memory access required, $O(d)$.

\subsection{Long Short-term Attention Memory}
The first MANN design we propose is a generic recurrent neural network that is derived from Long Short-term Memory (LSTM)~\citep{lstm}.
LSTM leverages two recurrent state vectors: the short-term hidden state and the long-term cell state.
To mitigate the problems of vanishing/exploding gradients, the cell state is additively updated using forget and input gates.
Then, the output gate selectively reads the cell state to determine the hidden state.
Long Short-term Attention Memory (LSAM) follows the same principle.
Instead of using the vector cell state, it leverages the memory matrix $M_t\in \mathbb{R}^{d^2}$ which is also additively updated using the NAM write primitive.
The hidden state $h_t\in \mathbb{R}^{d}$ is retrieved by reading the memory matrix.
Given the input $x_t\in \mathbb{R}^{d}$, the update rule $M_t, h_t = LSAM(x_t,M_{t-1},h_{t-1})$ is defined as follows.
\begin{align*}
    [q_t:k_t:v_t] &= W_{qkv}[x_t:h_{t-1}] + b_{qkv}\\
    <p_r,p_w> &= \sigma (W_{rw}[x_t:h_{t-1}] + b_{rw}) \\
    M_t &= WR(M_{t-1},\mu(k_t),v_t,p_w,p_w)\\
    h_t &= RD(M_t,\mu(q_t^i),p_r)
\end{align*}
Hereby $:$ is the concatenate operator, $\sigma(.)$ is the sigmoid function, and $W_{qkv}, W_{rw}, b_{qkv}, b_{rw}$ are trainable weights and biases.
Although the memory matrix $M_t$ has much higher capacity than a vector cell state, the computational complexity of LSAM is identical to that of LSTM.
This is because $WR$ and $RD$ have complexity of $O(d^2)$ which is identical to that of matrix-vector multiplication of the weights and the states.
Like Transformers, we can design multi-headed LSAM by concatenating per-head states $M_t^i$ and $h_t^i$.
Bidirectional LSAM is also possible by splitting the multiple heads into two directions.
The backward heads are updated in the opposite direction by the rule of $ M_t^j, h_t^j = LSAM(x_t, M_{t+1}^j, h_{t+1}^j)$.

The LSAM architecture combines the strengths of recurrent neural networks (RNN) and Transformers.
Since LSAM follows the RNN design so that it enjoys strengths of RNNs such as low computational cost for inference and recurrent inductive bias.
Additionally, it benefits from the strengths of attention because reading and writing the memory matrix natively incorporates the attention mechanism.

\subsection{NAM Turing Machine}

The second MANN design to present is a network that resembles Turing machine.
Neural Turing Machine (NTM) ~\citep{ntm} is one of the early neural networks that implement external memory structure with differentiable read and write methods.
It is a basis of Differentiable Neural Computer (DNC)~\citep{dnc} which has proven to be effective at solving a variety of algorithmic tasks such as answering synthetic questions and finding shortest paths.
Their external memory matrix is accessed by read and write heads using differentiable attention mechanisms.

Hereby we design NAM Turing Machine (NAM-TM) which adopts the design principles of NTM.
The main idea of NAM-TM is to treat the tape state with $N$ vectors $T = [v_1, v_2, ... v_N, \boldsymbol{0}, ...] \in \mathbb{R}^{L\times d}$ as a memory matrix as follows.
\begin{align*}
    T &= [v_1, ... v_n, \boldsymbol{0}, ...] = \sum_i{v_i e_i^\top} & (v_i\in \mathbb{R}^d, e_i\in \mathbb{R}^L)
\end{align*}
Hereby $e_i$ are standard basis vectors, e.g., $<0,0,...1,...,0>$ of $\mathbb{R}^L$ where $L$ is the size of the tape.
This tape state can now be accessed by using NAM read/write primitives.

NAM-TM is a differentiable function that takes the tape state $T$, read and write heads $H_r,H_w \in \mathbb{R}^L$, and the input vector $x\in \mathbb{R}^d$ and produces the read output $R \in \mathbb{R}^d$ along with the updated tape and head states $T',H_r',H_w'$.
$$R, T', H_r', H_w' = NAMTM(T,H_r,H_w,x)$$
The read and write heads  are positional vectors to attend the memory matrix for reading and writing the states.
At each time step, the positions can be updated by four actions: LEFT, RIGHT, NO-OP, and JUMP.
They are controlled by a controller neural network $nn\_control(x)$ which emits read and write probabilities $p_r,p_w$, action probabilities $p_{right},p_{left},p_{noop},p_{jump}$ for each head and a unit jump query vector $q_{jump}\in \mathbb{R}^d$.
Given the controller outputs and the value $v = W_v x$ to write, reading and writing the memory are conducted by NAM primitives as follows.
\begin{align*}
    R &= RD(T,H_r,p_{r}) \\
    T' &= WR(T,H_w,v,p_{w},p_{w})
\end{align*}
Then, each head is updated to the next position based on the action probabilities.
LEFT and RIGHT actions can be performed by the differentiable $roll$ function, which is a linear transformation $\mathbb{R}^L \longrightarrow \mathbb{R}^L$ mapping $e_i$ to $e_{i+1}$.
$$(H_{LEFT},H_{RIGHT}) = (roll^{-1}(H),roll(H))$$

The jump position $H_{JUMP}$ is determined by reading the transpose of a key tape $K$ with the unit query vector $q_{jump}$.
The key tape is written in the same way as the tape $T$, but it stores the corresponding unit key vector derived using the weight matrix $W_k \in \mathbb{R}^{d \times d}$ and the L2 normalization $\mu(.)$.
\begin{align*}
    K' &= WR(K, H_w, \mu(W_k x), p_{w}, p_{w}) \\
    H_{JUMP} &= RD(K'^\top,q_{jump},1)
\end{align*}
One can understand the transpose of the key tape $K^\top = \sum_i e_i k_i^\top$ as a jump table.
Reading $K^\top$ with a key vector $k_i$ returns the corresponding position vector $e_i$ if the keys are orthonormal.
Finally, the next head position $H'$ is updated as a weighted sum of the positions.
\begin{align*} 
    H' =& p_{noop}\times H + p_{left}\times H_{LEFT}\\
    &+ p_{right}\times H_{RIGHT} + p_{jump}\times H_{JUMP}
\end{align*}

While all of the computations are technically done with a fixed tape length $L$, none of the trainable parameters depend on the value of $L$.
That is, a NAM-TM trained on certain length $L$ can be applied to any tape length $L'$ without modification nor re-training.
Theoretically, it can be extended to infinite-dimensional Hilbert spaces.

There are multiple strengths in NAM-TM design compared to the memory structures of NTM and DNC.
First, unlike NTM and DNC, NAM-TM's building blocks are simple and computationally efficient.
Second, NAM-TM's addressing mechanism is based on the query-key-value attention mechanism of NAM, which is more powerful than the content-based attention mechanism used in NTM and DNC.
Finally, NAM-TM's design is flexible in that it is easy to add/remove transition rules of the read/write heads.
For example, the JUMP transition rule is optional in that a Turing machine only requires LEFT and RIGHT transitions in theory.
One can also add another transition rule for the head positions if the rule can be defined with differentiable functions.

\subsection{Few-shot learning}
Intuitively, a human can perform few-shot learning via memorizing.
Therefore, we propose one-shot or few-shot learning as memorization using NAM primitives.
Given a input-output vector pair $(x,y)$, it is possible to conduct \emph{one-shot learning} of a weight matrix $W$ by the $WR$ operation.
This can be shown by the following corollary of Theorem~\ref{thm:readwrite}.
\begin{corollary}
If \(W'=WR(W,y/|x|,x/|x|,1,1)\), \\ then \(W'x = y\).
\end{corollary}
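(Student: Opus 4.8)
The plan is to read this off from Theorem~\ref{thm:readwrite} together with the linearity of matrix–vector multiplication, so that only a short substitution remains. First I would pin down the one nondegeneracy fact that is needed: we require $x \neq \boldsymbol{0}$, so that $k := x/|x|$ is a well-defined unit vector, with $k^\top k = 1$ and $k^\top x = |x|$. Then, writing the update with this unit key and the rescaled value $y/|x|$ — that is, $W' = WR(W, x/|x|, y/|x|, 1, 1) = W + \tfrac{1}{|x|}\,y\,k^\top - W k k^\top$ — Theorem~\ref{thm:readwrite}, applied with memory $W$, key $k$, value $y/|x|$, and $p_w = p_e = 1$, immediately yields $RD(W', k, 1) = W'k = y/|x|$.

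Next I would finish by exploiting that $v \mapsto W'v$ is linear: $W'x = W'\big(|x|\,k\big) = |x|\,(W'k) = |x|\cdot \tfrac{y}{|x|} = y$. Equivalently, one can bypass the appeal to Theorem~\ref{thm:readwrite} and just expand $WR$ directly and right-multiply by $x$: using $k^\top x = |x|$ and $(Wk)\,|x| = W(|x|\,k) = Wx$, one gets $W'x = Wx + \tfrac{1}{|x|}\,y\,(k^\top x) - Wk\,(k^\top x) = Wx + y - Wx = y$.

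I do not anticipate any real obstacle — the computation collapses in a single line once the definition of $WR$ is unfolded. The two points I would be careful to state explicitly rather than leave silent are: (i) the assumption $x \neq \boldsymbol{0}$, without which $x/|x|$ is undefined; and (ii) that the stored key is the normalized input $x/|x|$ while the stored value is $y/|x|$, the extra $1/|x|$ factor on the value being exactly what lets the \emph{un-normalized} query $x$ (rather than $x/|x|$) recover $y$ on the nose. I would also keep the phrasing of the write as $WR(W,\,x/|x|,\,y/|x|,\,1,1)$ so that the key/value roles are unambiguous in the reduction to Theorem~\ref{thm:readwrite}.
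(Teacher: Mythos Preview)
Your argument is correct and is exactly the one the paper intends: the corollary is stated without proof as an immediate consequence of Theorem~\ref{thm:readwrite}, and your reduction---apply the theorem with unit key $k=x/|x|$ and value $y/|x|$, then scale by $|x|$ using linearity---is the natural one-line verification. Your explicit remark about the argument order is on point: with the paper's signature $WR(M,k,v,\dots)$ the key must be the unit vector $x/|x|$ and the value must be $y/|x|$, so the order in the corollary's statement should indeed read $WR(W,\,x/|x|,\,y/|x|,\,1,1)$ for the appeal to Theorem~\ref{thm:readwrite} to go through; your note on requiring $x\neq\boldsymbol{0}$ is also a genuine (if minor) hypothesis the paper leaves implicit.
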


In fact, a feature-averaging cosine classifier~\cite{fewshotwithoutforgetting} is a special case of NAM, where the erasure probability is set to zero.
Assume $W_{novel}^0\in \mathbb{R}^{N\times H}$ is the cosine classifier for $N$ novel classes and $H$-dimensional feature vectors initialized to zero.
The following theorem shows that the feature-averaging cosine classifier can be defined using the NAM primitives.
\begin{theorem}
Assume \(e_i \in \mathbb{R}^N\) is a one-hot vector and $h_i$ is the average of feature vectors for class \(i\). If \( W_{novel}^{i} = WR(W_{novel}^{i-1},e_i, h_i/|h_i|,1,0)\), then \(W_{novel}^{N} h/|h|\) gives a vector of cosine similarities between \(h_i\) and \(h\).
\end{theorem}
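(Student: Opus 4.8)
The plan is to unroll the $WR$ recursion in the degenerate case $p_e=0$ and then apply a single $RD$. First I would record that a write with zero erase probability collapses to a plain rank-one update,
\[
  WR(M,k,v,1,0) = M + v\,k^{\top},
\]
so, unlike the orthonormal-key regime of Lemma~\ref{lemma:orthonormal} (where $p_e=1$ forces the explicit erase term), nothing has to cancel in order for successive writes to simply accumulate. This is the whole point of taking $p_e=0$.

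Next I would fix conventions so that $W_{novel}\in\mathbb{R}^{N\times H}$ sends an $H$-dimensional feature to an $N$-dimensional score vector, with the $i$-th write using the normalized class mean $h_i/|h_i|$ as its key and the one-hot vector $e_i$ as its value. Starting from $W_{novel}^{0}=\boldsymbol{0}^{N\times H}$, a one-line induction---formally identical to the proof of Lemma~\ref{lemma:orthonormal}, except that the erase term is absent by hypothesis rather than vanishing by orthogonality---gives
\[
  W_{novel}^{N} \;=\; \sum_{i=1}^{N} e_i\,\frac{h_i^{\top}}{|h_i|}.
\]
Applying $RD$ with query $h/|h|$ and $p_r=1$ then yields
\[
  W_{novel}^{N}\,\frac{h}{|h|} \;=\; \sum_{i=1}^{N} e_i\,\frac{h_i^{\top}h}{|h_i|\,|h|} \;=\; \sum_{i=1}^{N}\frac{\langle h_i,h\rangle}{|h_i|\,|h|}\,e_i .
\]
Since $e_1,\dots,e_N$ are distinct one-hot vectors, this is exactly the vector whose $i$-th coordinate equals $\langle h_i,h\rangle/(|h_i|\,|h|)=\cos(h_i,h)$, i.e. the output of a cosine classifier whose class-$i$ weight is the normalized class mean, which is the claim.

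There is no genuinely hard step here; the work is bookkeeping rather than mathematics. The two points I would pin down carefully are (i) which argument of $WR$ is the key and which the value---hence that the accumulated memory has shape $N\times H$ and that $W_{novel}^{N}h/|h|$ is indeed an $N$-vector---and (ii) that it is precisely the one-hot (orthonormal) structure of the $e_i$ that lets the single matrix $W_{novel}^{N}$ behave as $N$ independent per-class similarity readouts instead of superposing the classes. The same computation also makes precise the earlier remark that a feature-averaging cosine classifier is the $p_e=0$ special case of NAM: dropping the erase term is exactly what reduces NAM's general overwriting write to the plain storage of normalized class prototypes that a cosine classifier performs.
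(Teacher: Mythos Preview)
Your proposal is correct and follows essentially the same approach as the paper: unroll the $p_e=0$ recursion to $W_{novel}^{N}=\sum_i e_i\,h_i^{\top}/|h_i|$, then multiply by $h/|h|$ to read off the vector of cosine similarities. Your explicit flag about the key/value slot is well placed---the paper's own proof silently adopts the same convention (treating $e_i$ as the value so the accumulated matrix is $N\times H$), so your resolution matches theirs exactly.
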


\begin{proof}
\(W_{novel}^N = \sum_i e_i h_i^\top / |h_i|\). Hence,
\begin{align*}
W_{novel}^{N} (h/|h|) &= \sum_i (h_i^\top h/|h_i||h|)e_i \\ 
&= <h_1^\top h/|h_1||h|, ..., h_N^\top h/|h_N||h|>. 
\end{align*}
\end{proof}

This feature-averaging cosine classifier algorithm does not alter the base-class classifier.
While this prevents catastrophic forgetting, it can suffer from false-positives to the base categories.
Using NAM, we can leverage the erasure probability to supress the false-positives to the base classes.

\begin{algorithm}[ht]
   \caption{NAM Few-shot Learning}
   \label{alg:namfewshot}
\begin{algorithmic}
  \STATE {\bfseries Input:} $W_{base}$, feature vectors $x_{11} \dots x_{NK} \in \mathbb{R}^H$
  \STATE $W = concat(W_{base}, \boldsymbol{0}^{N\times H}) \in \mathbb{R}^{(B+N)\times H}$
  \STATE $W' = \boldsymbol{0}^{(B+N)\times H}$
  \FOR{$n=1$ {\bfseries to} $N$}
  \FOR{$k=1$ {\bfseries to} $K$}
  \STATE $p_w, p_e = \sigma (f_{rw}(x_{nk}))$
  \STATE $W_{nk} = WR(W, e_n, x_{nk}/|x_{nk}|, p_w, p_e)$
  \STATE $W' = W_{nk}/K$
  \ENDFOR
  \ENDFOR
  \STATE {\bfseries return} $\mu (W')$

\end{algorithmic}
\end{algorithm}

Algorithm~\ref{alg:namfewshot} describes a generalized $N$-way $K$-shot learning given pre-trained feature extractor $feat(\cdot)$ and the $B$ base-class classifier $W_{base} = e_1 w_1^\top + \dots + e_B w_B^\top$.
Hereby $\sigma(\cdot)$ is the sigmoid activation, $f_{rw}(\cdot)$ is a trainable neural network for read and write probabilities, and $\mu(\cdot)$ is L2 normalization per row.
Unlike the feature-averaging consine classifier, the write probability $p_w$ gives a relative importance to each sample and the erase probability $p_e$ suppresses the false-positive from the base-class classifier.
Note that the loop can be parallelized so that the actual implementation uses matrices to compute the entire $N\times K$ batch at once.

\subsection{Efficient Transformer using NAM}
\label{sec:namtransformer}

As discussed in Section~\ref{sec:namprimitives}, NAM is a good candidate for an efficient attention mechanism as its computational complexity is linear to the sequence length.
However, the sequential nature of $WR$ can make NAM inefficient in parallel GPU and NPU architectures.
This is because the erasure part $- p_e Mkk^\top$ of $WR$ depends on the current state of $M$.
Hence, we can make $WR$ parallel by setting the erase probability $p_e$ to zeros.
The reads can also become parallel if we make it \emph{bidirectional} by reading the same final memory matrix $M = \sum_t p_w^{(t)} v^{(t)}k^{(t)\top}$, which is sum of outer product from every time step $t \in 1,\dots,S$.
Reading this memory matrix can act as an attention mechanism, namely \emph{normalized outer-product attention}.

As proven at Theorem~\ref{thm:read}, normalized outer-product attention can fully replace other attention mechanisms if the keys are orthonormal.
In practice, such a hard guarantee is not possible.
However, normalized outer-product attention can act as a soft attention mechanism if the keys are well separated from each other.
By removing the erasure process, it trades off the computational power of NAM for the maximal efficiency and parallelism.

Hereby we implement an efficient linear bidirectional transformer using normalized outer-product attention, namely \emph{NAM-Transformer}.
NAM-Transformer leverages the fully parallel implementation of  normalized outer-product attention and sets the read/write probabilities to 1.
Given queries $Q\in \mathbb{R}^{S \times d_k}$, keys $K\in \mathbb{R}^{S \times d_k}$, and values $V\in \mathbb{R}^{S \times d_v}$ of a sequence with $S$ tokens, NAM-Transformer's self-attention is computed as follows.
$$ SelfAttn_{NAM}(Q,K,V) = (V^\top \mu(K)) \mu(Q)$$
Hereby $\mu$ is a unit-vector normalization function applied to each row vector of $K$.
Computationally, this is similar to Linear Transformer~\citep{lineartransformer}.
The differences from Linear Transformer to ours are: 1) we use unit vector normalization instead of ELU kernel function, 2) so that we do not need to compute the causal masking factor $Z$~\citep{lineartransformer}.
That is, one can think Linear Transformer as a special variant of NAM, sacrificing computational capabilities for parallelism and efficiency.


\section{Experimental Setup}
\label{sec:setup}
\begin{figure}[ht]
    \centering
    \includegraphics[width=0.8\linewidth]{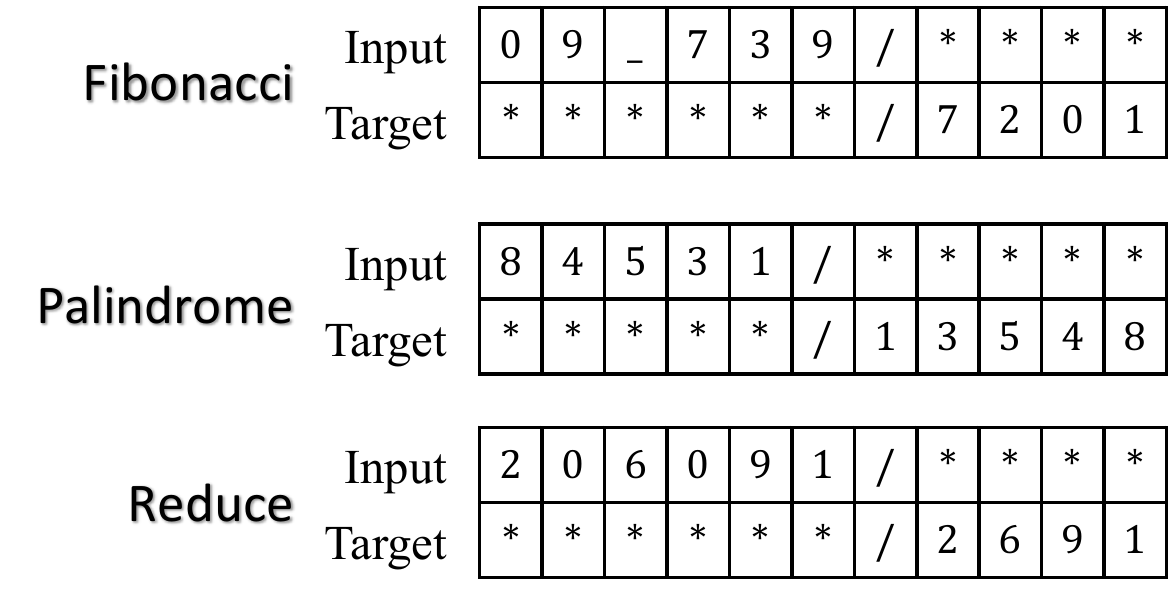}
    \caption{Input and output sequence examples of the algorithmic tasks. The Fibonacci sequence is given in the little-endian order.}
  \label{fig:tasks}  
  \vskip -0.1in
\end{figure}
\subsection{Algorithmic Tasks}
We test the computational powers of LSAM and NAM-TM using three types of algorithmic tasks in inductive zero-shot generalization setups.
First, number sequence prediction~\citep{numbersequenceprediction} task (NSP) is a suite of synthetic problems to predict the following digits of the numerical sequences.
It can test the inductive zero-shot generalization capability by testing/validating the models with the longer decimal numbers that are never observed during the training stage.
In this setup, many models often suffer from drastic fall of test/validation accuracy, due to lack of inductive bias~\citep{seq2grid}.
We use two representative sequences from NSP: Fibonacci (Fib) and Palindrome (Palin).
The two tasks require the generalization of digital addition and sequence reversal rules, respectively.

Next is a sequence reduction (Reduce) task with a simple rule: given the sequence of digits, the target is the reduced sequence by skipping zeros.
This is a task that can be easily solved by a proper Turing machine.
We also use a similar zero-shot generalization setup by testing/validating using the longer target sequences that are longer than any target sequences in the training dataset.

For NSP and Reduce tasks, we use training datasets with $d=1\dots 10$ decimal digit sequences in little-endian order.
Then we validate/test the models with two validation sets (ID, OD-easy) and a test set (OD-hard) for each task.
An in-distribution (ID) validation set consists of $d=5\dots 10$-digit sequences, and an out-of-distribution validation set (OD-easy) consists of $d=11\dots 13$-digit sequences.
The harder out-of-distribution test set (OD-hard) consists of $d=14\dots 16$-digit sequences, challenging the models with longer contexts.
A training dataset has 25600 samples, and each validation/test set has 2048 samples.

We format the problems as masked sequence completion problems as shown in Figure~\ref{fig:tasks}.
An input sequence consists of input tokens followed by masks, and an output sequence consists of target tokens following the masks.
Since the input and output sequences are 1:1 matched, we can compare to baseline models without sequence-to-sequence structures like DNC.
We choose four baseline models to compare: a bidirectional Transformer encoder (TF)~\citep{bert}, an 2-layer LSTM model with attention~\citep{bahdanau}, an Universal Transformer~\citep{universaltransformer}, and a Differentiable Neural Computer (DNC).
The TF model follows the architecture of BERT$_{medium}$ and the hyperparameters of the other models are adjusted to have similar parameter counts.
Our LSAM and NAM-TM networks have two LSAM layers and two NAM-TM layers respectively, whose hyperparameters are also adjusted to have similar parameter counts.
As an ablation study, we also evaluate NAM-TM design without the JUMP transition (No Jmp).
All experiments are run with PyTorch 1.10 on the system with Ubuntu 20.04 and RTX 3080.
Each experiment takes less than five hours to run 200 training epochs.
The implementation details and hyperparameters can be found at the source code in the supplementary materials.

\begin{table*}[ht]
\caption{Sequence accuracy (\%) comparison on the algorithmic tasks.}
\vskip 0.1in
\label{table:compgen}
\renewcommand{\arraystretch}{1.2}
\centering
\begin{tabular}{rrccccccc}
\hline
\multicolumn{2}{r}{Model}              & TF     & LSTM   & UT     & DNC    & LSAM            & NAM-TM & No Jmp\\ \hline
\multicolumn{2}{r}{Parameters}         & 28.7M  & 31.5M  & 26.0M  & 40.2M  & 25.8M           & 23.3M & 22.2M \\ \hline
\multirow{3}{*}{Palin}  & ID      & 100\%  & 100\%  & 100\%  & 100\%  & 100\%           & 100\% & 100\% \\
                             & OD-easy & 65.6\% & 100\%  & 99.2\% & 100\%  & 100\%           & 100\% & 100\% \\
                             & OD-hard & 19.0\% & 100\%  & 5.2\%  & 100\%  & 100\%           & 100\% &  100\% \\ \hline
\multirow{3}{*}{Fib}   & ID      & 98.9\% & 46.2\% & 99.1\% & 46.9\% & 100\%           & 97.4\% & 40.4\% \\
                             & OD-easy & 0.4\%  & 8.3\%  & 21.3\% & 1.8\%  & 39.9\%          & \textbf{89.7\%} & 19.8\%\\
                             & OD-hard & 0.0\%  & 0.0\%  & 0.1\%  & 0.0\%  & 2.9\%           & \textbf{71.5\%} &1.1\%\\  \hline
\multirow{3}{*}{Reduce}      & ID      & 99.6\% & 99.5\% & 98.7\% & 100\%  & 99.9\%          & 100\% &100\%\\
                             & OD-easy & 0.2\%  & 91.5\% & 15.7\% & 95.7\% & 96.8\%          & \textbf{100\%}  &\textbf{100\%}\\
                             & OD-hard & 0.0\%  & 63.6\% & 0.0\%  & 60.1\% & 77.6\%          & \textbf{100\%}  &\textbf{100\%}\\ \hline
\end{tabular}
\vskip -0.1in
\end{table*}

\subsection{Few-shot Learning}

We conduct $N$-way $K$-shot image classification task on MiniImageNet dataset.
Base on the code of Gidaris et al.~\footnote{https://github.com/gidariss/FewShotWithoutForgetting}, we implement our NAM few-shot learning algorithm and compare it to their cosine classifier.
We use two-layer feed-forward networks for determining the read and write probabilities of each sample.
Feature-averaging weights and attention-based mechanisms~\cite{fewshotwithoutforgetting} of the original work are applied to both cosine classifier and NAM few-shot learning, as they are independent to those methods.
Except for the few-shot weight generation module, we follow the same experimental configuration of the original work.
We use their ResNet-like feature extraction model because it shows the best classification results when the both novel and base categories are present.
We choose the best classifier by the validation accuracy for both categories (AccuracyBoth), and report the evaluation results with the test dataset.
As of the algorithmic tasks, experiments are run with PyTorch 1.10 on the system with Ubuntu 20.04 and RTX 3080, taking less than five hours per training one model.
The implementation details are available at the source code in the supplementary materials.

\subsection{Long-range task evaluation}

Although NAM is theoretically capable of replacing scaled dot-product attention, there exists a danger of information loss due to the limited capacity of the memory $M$.
Meanwhile, such an information loss is not an issue for Transformers because they utilize hidden activations from the entire sequence.
Hence we need empirical evidences that NAM is an effective alternative of attention.
As a proof-of-concept, we prove the efficacy and efficiency of NAM by evaluating NAM-Transformer at long-range sequence tasks.

We test our NAM-Transformer on long-range arena tasks~\citep{longrangearena} that have been used to compare efficient Transformer architectures.
Since the previous work has already proven that Linear Transformer is as capable as others, we conduct partial comparison of ours to the original Transformer and Linear Transformer.

We use the code base of the original benchmark~\footnote{github.com/google-research/long-range-arena} and implemented our NAM encoder by modifying the Linear Transformer implementation.
The three models share the same hyperparameters and the only differences come from the attention algorithms.
The evaluations are run on the Ubuntu 20.04 system with RTX 3080.
The results differ from the original work because we used smaller batch sizes due to the limited VRAM capacity.
All other experimental setups, including hyperparameters, are identical to the original benchmark~\citep{longrangearena}.
Details are available in the source code included as a supplementary material.

\section{Experimental Result}
\label{sec:result}

\begin{table*}[ht]
\caption{Top-1 test accuracies of cosine classifier (Cosine) and NAM few-shot learning (NAM). We report cases with only novel categories (Novel), base categories (Base), and both novel and base categories (Both). We also show 95\% confidence intervals.}
\vskip 0.1in
\label{table:fewshot}
\renewcommand{\arraystretch}{1.2}
\centering
\begin{tabular}{rcccccc}
\hline
           & \multicolumn{3}{c}{5-way 1-shot}                 & \multicolumn{3}{c}{5-way 5-shot}                           \\
Method     & Both               & Novel              & Base    & Both                        & Novel              & Base    \\ \hline
Cosine     & 52.68\% $\pm$ 0.43 & 56.00\% $\pm$ 0.82 & 79.86\% & 57.86\% $\pm$ 0.39          & 70.25\% $\pm$ 0.64 & 79.37\% \\
NAM (ours) & 52.88\% $\pm$ 0.43 & 56.11\% $\pm$ 0.81 & 79.55\% & \textbf{60.15\% $\pm$ 0.38} & 70.65\% $\pm$ 0.67 & 78.87\% \\ \hline
\end{tabular}
\vskip -0.1in
\end{table*}
\subsection{Algorithmic Task Evaluation}

Table~\ref{table:compgen} shows the sequence accuracy comparison of the models in algorithmic tasks.
We report the sequence accuracies of the models, where a wrong prediction of one token is counted as a failed prediction of the entire sequence.
To avoid cherry-picking, the epochs of the best OD-easy validation accuracies are presented for the evaluation results.

Although LSAM's architecture is not specifically designed for algorithmic tasks, LSAM performs better than the other baselines.
Surprisingly, it performs better than DNC, which is a MANN model designed for such algorithmic problems.
This is not a matter of model capacity because LSAM has a slightly smaller parameter count.
The results imply that the computational power of NAM architecture is superior to that of DNC's memory architecture.

As expected, NAM-TM performs significantly better in algorithmic tasks (Palin, Fib, and Reduce) possibly due to its specialized architecture.
Especially, it finds easier to solve OD-hard problems, whereas the other models experience steep performance decline.
A potential explanation is that the action-based positional transitions provide robustness in long-context cases.

NAM-TM remains effective at palindrome and reduction tasks even without the JUMP transition (No Jmp).
Hence, LEFT and RIGHT transitions seem to be enough for emulating simple Turing machines, but more complex transition rules can augment the computational power of NAM-TM.
This suggests extending NAM-TM to variety of tasks by augmenting specialized transition actions.

\subsection{Few-shot Learning Evaluation}

We compare our few-shot learning method to the feature-averaging plus attention-based weight generator at $N$-way $K$-shot learning task with MiniImageNet dataset~\cite{fewshotwithoutforgetting}.
Following the original work, we compare their top-1 accuracy in 5-way 1-shot and 5-way 5-shot setups.
Table~\ref{table:fewshot} shows the comparison between the baseline cosine few-shot classifier and our NAM few-shot classifier.
Both methods share the same ResNet-like feature extractor and attention-based weight generation mechanism.
For most of the cases, their performance differences are well within the confidence intervals.
However, the only experiments that are different beyond the confidence intervals are accuracies of both categories (Both) in the 5-shot setup.
Overall, the accuracies of both categories are less than accuracies of novel or base categories, suggesting that the few-shot classifiers suffer from false positives when both of novel and base categories are present.
NAM few-shot shows higher both-category accuracy in the 5-shot setup and slightly lower base-category accuracy compared to the baseline.
This indicates that the erasure capability of NAM mitigates the false-positive problem, with minimal forgetting for the base classes.
However, it seems like this effect requires numbers of samples as NAM and the baseline are almost identical at the one-shot setup.

\subsection{Long-range Task Efficiency}

\begin{table}[t]
\caption{Accuracy (\%) and relative training speedup comparison of the original Transformer (TF), Linear Transformer (Linear) and our NAM Transforemr (NAM) in listops (Listops), text classification (Text), pixel-level image classification (Image) tasks.}
\vskip 0.1in
\label{table:longrange}
\renewcommand{\arraystretch}{1.2}
\centering
\begin{tabular}{rcccccc}
\hline
                                                             & \multicolumn{2}{c}{ListOps} & \multicolumn{2}{c}{Text} & \multicolumn{2}{c}{Image} \\
Model                                                        & Acc          & Spd          & Acc         & Spd        & Acc         & Spd         \\ \hline
TF                                                  & 29.8         & 1            & 57.3        & 1          & 40.16       & 1           \\
Linear & 36.5         & 2.59         & 64.3        & 2.35       & 38.76       & 10.02       \\
NAM    & 36.1         & 2.66         & 63.5        & 2.44       & 37.18       & 10.07       \\ \hline
\end{tabular}
\vskip -0.1in
\end{table}

Table~\ref{table:longrange} shows that NAM can be an efficient alternative to Transformers.
Despite the information loss and missing normalizing factor $Z$, the task accuracy of NAM is very similar to the others, even surpassing the original Transformer in some cases.
We see bigger speedups at the image classification task because the models have much smaller per-head dimension than others (16 vs 64).
Recall that the computational complexities of both linear transformer and NAM transformer are quadratic to the per-head dimensions.
Hence, the image classification result shows that smaller per-head dimension can bring bigger speedups for NAM, but may result in lesser capacity.
Overall, the normalized outer-product attention is as powerful as the scaled dot-product attention while enjoying greater computational efficiency.
In other words, NAM can be an efficient and effective alternative for attention mechanism.

\section{Possible Future Work}
\label{sec:discussion}


\paragraph{Hierarchical data modeling}
Tensor product, the key operation behind NAM, is not limited to two-dimensional outer product.
Any dimensional tensors $T_i$ can construct a memory tensor $M$ by performing sum of tensor products $ M = \sum_i{T_i \otimes k_i} $, with the unit key vectors $k_i$.
For example, a document tensor $D$ can be constructed by a sum of tensor products of sentence-level keys and sentence matrices $S_i$, each of which is also a sum of outer products of word embeddings and word-level keys.
\emph{Nested attention} to such a hierarchical memory can be conducted by two inner products with a sentence-level unit query vector $q_s$ and a word-level unit query vector $q_w$.

\paragraph{Efficient edge inference}
While Transformers are very successful in many ML tasks, deploying such models for edge inference has been a challenging task since Transformer's computation and memory cost per each time step varies with the sequence length.
However, NAM's cost does not depend on the sequence length at all.
Also, the outer product operation is more compute intensive than the dot products, making NAM more friendly to high-throughput accelerators.
Hence, NAM can be an efficient Transformer alternative for edge inference.


\section{Conclusion}
We proposed a redesign of attention mechanism to construct a differentiable memory for neural networks, namely neural attention memory (NAM).
Following the same query-key-value structure of scaled dot-product attention, NAM first writes the memory matrix by adding outer products of key-value pairs.
Then we can read it by multiplying the matrix with a unit query vector. 
First, NAM can be a powerful basis for constructing MANN models.
We designed two NAM-based MANNs: LSAM for generic sequential tasks and NAM-TM for algorithmic tasks.
In algorithmic tasks that require inductive zero-shot generalization, both outperformed other baselines such as Universal Transformer and DNC, indicating that NAM is a more powerful mechanism for implementing memory in DNNs.
Next, memorization with NAM can be leveraged as a few-shot learning mechanism.
We showed that NAM based few-shot learning is an effective method for N-way K-shot learning, outperforming SOTA for mitigating false-positives.
Also, it can construct an efficient linear attention mechanism as its computational complexity does not rely on the sequence length.
The long-range arena evaluations showed that NAM based attention, namely normalized outer-product attention, is an efficient and effective alternative for scaled dot-product attention.

\bibliography{references}
\bibliographystyle{icml2023}



\end{document}